\newtheorem{theorem}{Theorem}
\newtheorem*{definition}{Definition}
\DeclareMathOperator*{\argmin}{arg\,min}
\newcommand{\mathbold}[1]{\boldsymbol{#1}}
\title{\LARGE \bf
Differential Flatness and Flatness Inspired Control of Aerial Manipulators based on Lagrangian Reduction}
\author{Skylar X. Wei$^{1}$, Peder Hårderup$^{2}$, and Joel W. Burdick$^{1}$
\thanks{$^{1}$Engineering and Applied Sciences, California Institute of Technology, Pasadena, CA, USA.
      $\qquad$
\texttt{\{swei, jburdick\}@caltech.edu.}}%
\thanks{$^{2}$Division of Decision and Control Systems, KTH Royal Institute of Technology, Stockholm, Sweden. $\;$
        \texttt{\{pederhar\}@kth.se.}}%
\thanks{The work is partially funded by NASA Jet Propulsion Laboratory and California Institute of Technology Graduate Student Fellowship.}
        }
\begin{document}

\maketitle
\thispagestyle{empty}
\pagestyle{empty}

\begin{abstract}
This paper shows that the dynamics of a general class of aerial manipulators, consist of an underactuated multi-rotor base with an arbitrary k-linked articulated manipulator, are differentially flat. Methods of Lagrangian Reduction under broken symmetries produce reduced equations of motion whose key variables: center-of-mass linear momentum, vehicle yaw angle, and manipulator relative joint angles become the flat outputs.  Utilizing flatness theory and a second-order dynamic extension of the thrust input, we transform the mechanics of aerial manipulators to their equivalent trivial form with a valid relative degree. Using this flatness transformation, a quadratic programming based controller is proposed within a Control Lyapunov Function (CLF-QP) framework, and its performance is verified in simulation.
\end{abstract}

\section{INTRODUCTION}

An aerial manipulator consists of an underactuated flying multi-rotor body with a multi-link manipulator attached mainly, but not exclusively, at the body's geometric center.  Aerial manipulation is of increasing interest \cite{review} since such systems inherit the high mobility of conventional multi-rotor drones, with the added ability to interact with the environment via the robot arm's end-effector.  Aerial manipulation has many practical applications, such as delivering packages and payloads \cite{delivery}, inspection of physical infrastructure using arm-mounted sensors \cite{infrastructureinspection}\cite{bridgeinspection}, and tool operation \cite{tooloperation}.

Aerial manipulators present several challenges in trajectory planning and control.  First, they are typically underactuated.  Moreover, movements of their heavy arms or heavy payloads can cause complex shifts in the overall center-of-mass (CoM), which can potentially induce instabilities.


This paper shows that a general class of aerial manipulators is {\em differentially flat}. Flat systems are equivalent to a trivial system via an endogenous transformation \cite{Fliess}, which enables dynamic feedback linearization.  Hence, our results lead to nonlinear controllability results for these complex aircraft, and to the design of locally exponentially stabilizing controllers.  Multi-rotors (without robot arms) are known to be differentially flat, and important multi-rotor trajectory planning methods are based on this fact \cite{flatquad1}\cite{flatquad2}.  Our result allows methods developed for conventional multi-rotors to be generalized to aerial manipulators.

Prior efforts to prove the differential flatness of aerial manipulators have required assumptions that are limiting in practice. 
For example, \cite{flatAM_endeffector} showed that an aerial manipulator with a 2-DoF arm is differentially flat, but assumed that the CoM must be fixed in the end-effector frame, which unrealistically implies a massless or motionless arm.  This result was generalized in \cite{flatAM_decoupledsubsystems} to manipulators with any number of links. But the results assume that the CoM can only be affected by external forces - an assumption invalid when a manipulator (with mass) is in motion. A planar aerial manipulator with any number of rigid or elastic joints was proven to be flat in \cite{flatAM_rigid/elastic}, and such result was generalized to any number of protocentric manipulators in \cite{flatAM_protocentric}; however, the overall CoM of the system must be fixed, else there are unaccounted Coriolis terms.  In \cite{flatquad_suspendedload}, valid flat outputs for rotorcraft with cable-suspended loads were given. But this result does not generalize to aerial manipulators, as  passive cable dynamics cannot model active dynamic coupling in manipulation.  In contrast, our results allow for a completely variable CoM, and arbitrary arm geometries.  

We use recent results in Lagrangian Reduction to formulate reduced aerial manipulator equations of motion (EoM). We have previously used reduction  to prove the small-time locally controllability of an aerial manipulator with a planar arm \cite{wei2021nonlinear}. This work extends the reduction process from a planar arm to general $k$-linked arms. Further, we propose a new flatness proof for aerial manipulators that crucially uses the reduced EoM. The flat outputs are: the linear momentum of the system CoM in the inertial frame, the yaw angle, and the  manipulator joint angles. For completeness, the singularities and the geometric significance of the flatness derivation are addressed. Inspired by the flat system's equivalence to a trivial Brunosky system, we suggest a second order dynamic extension to the thrust input. This allows us to design a locally exponentially stabilizing controller using a control Lyapunov function based quadratic program \cite{SONTAG1989117}. The tracking performance is demonstrated in simulation.

The paper is organized as follows. Section \ref{section:sysdecr} defines an aerial manipulator. Section \ref{sec:Lagrangianreduction} reviews concepts in Lagrangian Reduction, while Section \ref{section:dynamics} develops the reduced equations in a control-affine form.  Section \ref{section:diff_flatness} reviews differential flatness, and proves our main theorem. An exponential stabilizing controller based on flatness and dynamic extension is given in Section \ref{section:controller} and simulated in Section \ref{section:simulation}.

\section{System Description} \label{section:sysdecr}
We analyze the following class of aerial manipulators. 

\noindent \textbf{Definition:}
A multi-rotor aerial platform with the following characteristics is an \textbf{Aerial Manipulator (AM)} (see Fig. \ref{fig:coordinate_frames}):
\begin{itemize}
    \item The multi-rotor includes $n$-pairs of identical rotors attached to a common base, where $n\geq 2$. Each rotor pair consists of one clockwise and one counterclockwise rotating rotor. All thrust axes point in a common direction, denoted by unit vector $\hat{z}_b$. 
    \item A $k$-link fully-actuated manipulator is attached to the base's geometric center. All arm joints are revolute.
    \item All system components are rigid, and complex-fluid structure interactions are ignored.
\end{itemize}
Our model is  derived using the following reference frames:
\begin{itemize}
    \item The earth-fixed inertial frame $E=\{O^e,\hat{x}_e,\hat{y}_e,\hat{z}_e\}$.
    \item The aerial-base body frame $B=\{O^b,\hat{x}_b,\hat{y}_b,\hat{z}_b\}$.
    \item Manipulator $i^{th}$ link frame  $L_i=\{O^{L_i},\hat{x}_{L_i},\hat{y}_{L_i},\hat{z}_{L_i}\}$.
\end{itemize}
Notationally, vector $\mathbold{s}_{ab} \in \mathbb{R}^{3}$ denotes the position of frame B's origin relative to frame A's origin, and and $\mathcal{R}_{ab}\in SO(3)$ denotes the orientation of frame $B$ relative to frame $A$. The geometry of the robot arm is described using the Denavit-Hartenberg (DH) convention. A {\em link reference frame} is attached to each link according to the DH convention. Let $\mathcal{R}_{L_i}$, $\forall i \in \{1,\cdots,k\}$ denote the set of rotation matrices that describe the relative rotation of the $i^{th}$ link frame with respect to the $(i-1)^{st}$ link frame. Let $\mathbold{\eta} \triangleq [\eta_1, \cdots, \eta_k]^{T} \in \mathbb{S}^{k}$ denote the vector of robot arm joint angles, as defined in the DH convention.  The aerial-base body forms link $0$. For simplicity, our derivations assume that $\mathcal{R}_{L_1} = I_{3\times3}$, and that the manipulator's first link operates in the $\hat{x}_b - \hat{z}_b$ plane only. But this assumption is easily generalized. The linear velocity of the aerial-base in the $B$ frame is defined as $\dot{\mathbold{s}}_{b} \triangleq \mathcal{R}_{eb}^T\dot{\mathbold{s}}_{eb}$. 
Using standard roll, pitch, and yaw angles $\mathbold{\xi}=[\phi,\theta,\psi]^T$, the angular velocity, $\mathbold{\omega}_b$, of the aerial-base in the $B$ frame is:
\begin{equation} \label{eq:rollpitchyaw_eq}
    \mathbold{\omega}_{b} = 
   \left[\begin{smallmatrix}
        1 & 0 &-\sin(\theta)  \\
        0 & \cos(\phi) & \sin(\phi)\cos(\theta) \\
        0 & -\sin(\phi) & \cos(\phi)\cos(\theta)
    \end{smallmatrix}\right]
   \left[\begin{smallmatrix}
         \dot{\phi}  \\
         \dot{\theta} \\
         \dot{\psi}
    \end{smallmatrix} \right]
    = \Xi(\mathbold{\xi})\dot{\mathbold{\xi}}.
\end{equation}
Lastly, $\dot{\mathcal{R}}_{eb} \triangleq \mathcal{R}_{eb}S(\mathbold{\omega}_b)$ where $\mathcal{S}(\cdot)$ is the $3\times 3$ skew-symmetric matrix such that  $\mathcal{S}(\mathbold{\omega}_b)\mathbold{\beta} = \mathbold{\omega}_b \times \mathbold{\beta}, \forall \mathbold{\beta} \in \mathbb{R}^{3}$.

\begin{figure}
    \centering
    \includegraphics[width=8.4cm]{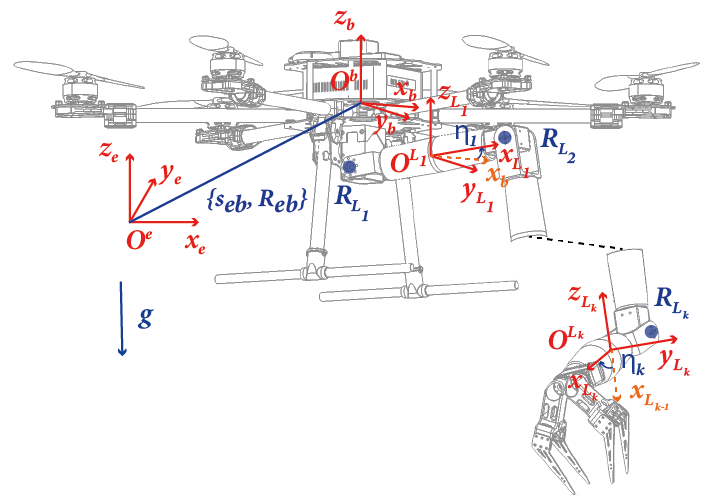}
    \caption{Geometry of an aerial manipulator system.}
    \label{fig:coordinate_frames}
    \vskip -0.2 true in
\end{figure}

\section{Lagrangian Reduction Preliminary} \label{sec:Lagrangianreduction}

A mechanical system is defined by the tuple $\Sigma = (Q,L,\mathcal{T})$, where $Q$ is its finite-dimensional configuration space, assumed to be a smooth manifold.  Let $TQ$ denote the tangent bundle of $Q$, and $T_qQ$ the tangent space to $Q$ at $q\in Q$.  Let $L:TQ\to \mathbb{R}$ be the system Lagrangian and let $\mathcal{T}(q,\dot{q})\in T^*Q$ represent the external forces acting on $\Sigma$, where $T^*Q$ is the dual of $TQ$. A Lagrangian possesses a {\em symmetry} if there is an action on its arguments that renders the Lagrangian invariant. This symmetry allows the reduction of the dynamical system to a lower dimensional phase space. 


A mechanical system possess a symmetry with respect to Lie group $G$ if its Lagrangian $L:TQ\to \mathbb{R}$ and external forces $\mathcal{T}(q,\dot{q})$ are Lie Group Invariant. The {\em left action} of Lie group $G$ on smooth manifold $Q$ is the map $\Phi_{\mathtt{g}}: Q\times Q: \rightarrow Q: q\rightarrow gq$ for any $q \in Q$. The configuration space of an aerial manipulator has the product structure $Q=G\times \mathcal{B}$ where $G=SE(3)$ describes the rigid body location of the multi-rotor base, and the {\em shape space} $\mathcal{B}$ models the manipulator joint variables.  The Lie group $SE(3)$ is a semidirect-product group: $SE(3)= H\otimes \mathcal{V}$, where Lie subgroup $H=SO(3)$ has a left action on  $\mathcal{V}=\mathbb{R}^3$. Thus $Q=SO(3)\otimes \mathbb{R}^3 \times \mathcal{B}$.

Associated with a Lie group is its {\em Lie algebra}, $\mathfrak{g}$, a vector space isomorphic to the tangent space at the group identity, i.e. $\mathfrak{g}\simeq T_{e}G$. 
The Lie algebra of a semidirect-product group can be written as $\mathfrak{g} = \mathfrak{h}\otimes T\mathcal{V} \simeq \mathfrak{h}\otimes \mathcal{V} $ with elements $(\xi_\mathtt{h},\xi_\mathcal{V}) \in \mathfrak{g}$. In local/body coordinates, $\xi_\mathtt{h} = \mathtt{h}^{-1}\dot{\mathtt{h}} \in \mathfrak{h}$ and $\xi_\mathcal{V} = \mathtt{h}^{-1}\dot{v}\in T\mathcal{V}$ where $(\dot{\mathtt{h}},\dot{v})\in TG$ is an arbitrary tangent vector and $\mathfrak{h}$ is the Lie Algebra of group $H$. Hence, $(q,\dot{q}) = (h,\dot{h},\nu,\dot{\nu},r,\dot{r})$ where $h,\dot{h}\! \in\! H$, $\nu,\dot{\nu}\! \in\! \mathcal{V}$, and $r,\dot{r}\!\in\! \mathcal{B}$.  For more details, see \cite{wei2021nonlinear}.

During a manipulation task, the system's CoM displaces as the arm moves, breaking a symmetry in the systems's potential energy.  We use {\em advected parameters} \cite{HOLM19981,Ostrowski_1996} to formulate Lagrangian reduction under symmetry breaking potential energy contributions. For mechanical systems, an {\em advected parameter}, $\mathbold{\gamma}(t)$, is a vector expressed in a body-fixed reference frame satisfying the differential equation:
\begin{equation}\label{eq:advected_def}
    \left(\frac{d}{dt} + \mathtt{g
    }^{-1}(t) \dot{\mathtt{g
    }}(t) \right)\mathbold{\gamma}(t)=0.
\end{equation}
where $ \mathtt{g}\in G$. 
For aerial manipulators, advected parameter $\mathbold{\gamma}(t) \triangleq \mathcal{R}_{eb}^{T}\mathbold{e}_{3}\in \mathcal{V}$ models the direction of gravity (a symmetry-breaking term) in the body-fixed coordinates. The dependency of the potential energy on the aerial base position is another symmetry breaking term. With $\mathbold{\zeta} \triangleq \mathcal{R}_{eb}^T \mathbold{s}_{eb}$, the potential energy $V_{AM}$ can be expressed as $V_{AM}(r,\mathbold{\gamma},\mathbold{\zeta})$, where $r$ denotes the shape variables. Hence, this reformulated potential energy is $G$-invariant.

Define the {\em Augmented Lagrangian} $\overline{L}:TG \times \mathcal{V} \to \mathbb{R}$ by augmenting the state with advected parameter $\mathbold{\gamma}$ and base position $\mathbold{\zeta}$. If the Augmented Lagrangian is $\Phi_{\mathtt{h}}$-invariant, then it can be reduced to $\mathfrak{h}\times \mathcal{V}\times TM$ \cite{Burkhardt_2018} where $\mathfrak{h}$ is a Lie algebra of $H$.  It can be shown that the system's reduced equation takes the general form (see \cite{Burkhardt_2018} Theorem 3.2.1):
\begin{align}\label{eq:matt_eq_1}
        \dot{\varrho}= \dot{r}^{T}\!\alpha(r,\mathbold{\gamma})\dot{r} &+ \dot{r}^{T}\!\beta(r,\mathbold{\gamma})\varrho +\varrho^{T}\!\kappa(r,\mathbold{\gamma})\varrho + \tau_{\varrho}(q,\mathbold{\gamma}),\\
\label{eq:matt_eq_2}
        M(r) \ddot{r} &= -C(r,\dot{r}) + N(r,\dot{r},\varrho) + \tau_{r}(r,\mathbold{\gamma}),\\
\label{eq:matt_eq_3}
        \dot{\mathbold{\gamma}} &= -\xi_h\mathbold{\gamma},\\
        \dot{\mathbold{\zeta}} &= -\xi_h (\mathbold{\zeta} - A(r,\mathbold{\gamma})), \label{eq:matt_eq_4}
\end{align}
where $\varrho\in \mathfrak{se}^{*}(3)$ are generalized momenta, defined as $\varrho_i(\xi) = \left<b_i,\xi \right>$ along the symmetry directions where $\xi\in \mathfrak{g}$ and ${b_i}$ denote any basis of the tangent space to the orbit at $q\in Q$. Also, $\tau_{\varrho}(r,\mathbold{\gamma},\mathbold{\zeta})$ represents the conservative forces and moments resulting from gravity projected along the momenta directions.
The functions $\alpha(r,\mathbold{\gamma})$, $\beta(r,\mathbold{\gamma})$, and $\kappa(r,\mathbold{\gamma})$ are smoothly dependent upon the shape variables $r$ and advected parameter $\mathbold{\gamma}$. 
For the shape dynamics, the reduced mass-matrix, Coriolis, potential terms, and actuation forces are denoted as $M(r)$, $C(r,\dot{r})$, $N(r,\dot{r},\varrho)$, and $\tau_{r}(r,\mathbold{\gamma})$ respectively. Lastly, $A(r,\mathbold{\gamma})$ is an invertible matrix which will arise in the reconstruction equation, defined later (\ref{eq:matt_eq_5}). The structure of (\ref{eq:matt_eq_1}) and  (\ref{eq:matt_eq_2}) follows from the reduced variational principle with
an extended base space consisting of the generalized momenta $\varrho = (\rho_1,\cdots,\rho_n)$ and shape-space variables $(r,\dot{r})$.  Eq.s (\ref{eq:matt_eq_1}) - (\ref{eq:matt_eq_4}) are, respectively, the \textit{momentum equation}, \textit{shape dynamics}, \textit{advection equation}, and \textit{position dynamics} of the system. Together, they form a complete, reduced representation of the system dynamics.

To recover the spatial motion of the system, we employ a {\em reconstruction equation} as known as {\em connection}. It defines a horizontal space of $T_qQ$ as $H_qQ\triangleq\mbox{Ker}(\mathcal{A}(q))$, where $\mathcal{A}$ is a principle connection form and describes motion along the fiber of $Q$ as the flow of a left-invariant vector field. See \cite{BKMM} and \cite{Ostrowski_1996}. The general form of connection is:
\begin{equation} \label{eq:matt_eq_5}
     \xi_{{h}} ={h}^{-1}\dot{{h}} = -A(r,\gamma) \dot{r} + \Gamma^{-1}(r,\gamma)\varrho,
\end{equation}
where $A(r,\gamma)$ and $\Gamma(r,\gamma)$ are mass and inertia liked matrix.

\section{System Dynamics using Lagrangian Reduction and Reconstruction} \label{section:dynamics}

This section explicitly derives the reduced EoM for the general class of aerial manipulators defined above.

\subsection{Kinematics and Dynamics:}\label{sec:kinematicsanddynamics}
Suppose $m_b$ and $\mathcal{I}_b$ are the mass and inertia tensor of the multi-rotor, expressed in frame $B$. The kinetic energy of the multi-rotor is  $K_b = \frac{1}{2}m_b\dot{\mathbold{s}}_{eb}^{T}\dot{\mathbold{s}}_{eb} + \frac{1}{2} \mathbold{\omega}_{b}^T \mathcal{I}_b \mathbold{\omega}_{b}$,
and potential energy is $V_s = m_b g \mathbold{e}_3^T \mathbold{s}_{eb}$, where vectors $\mathbold{e}_1$, $\mathbold{e}_2$, and $\mathbold{e}_3$ are the standard Cartesian basis vectors.

Let the mass of link $i = 1,\cdots, k$, w.r.t. frame $L_k$ be $m_i$.
The $k$-link manipulator dynamics can be conveniently expressed using the manipulator Jacobian matrix 
\cite{MurrayRobotManipulator}. Similar to the multi-rotor base, the kinetic and potential energy of each link can be calculated by summing the translational and rotational contributions.
Let $\dot{\mathbold{x}} = [\dot{\mathbold{s}}_b^T,\mathbold{\omega}_{b}^T,\dot{\mathbold{\eta}}^{T}]^T \in \mathbb{R}^{6+k}$. The total kinetic energy of the system can be rewritten as the following:
\begin{align*} \label{eq:mass_part}
    K_{tot}= \frac{1}{2}  \dot{\mathbold{x}}^{T}\underbrace{\left[\begin{array}{c|c|c}
          \mathcal{M}_{\mathbold{p}} &\mathcal{M}_{\mathbold{p}\mathbold{\omega}} &  \mathcal{M}_{\mathbold{p}\mathbold{l}} \\\hline
          \mathcal{M}_{\mathbold{p}\mathbold{\omega}}^{T} &\mathcal{M}_{\mathbold{\omega}}  & \mathcal{M}_{\mathbold{\omega} \mathbold{l}} \\
          \hline
           \mathcal{M}_{\mathbold{p}\mathbold{l}}^{T} &\mathcal{M}_{\mathbold{\omega} \mathbold{l}}^{T} & \mathcal{M}_{\mathbold{l}}
    \end{array} \right]}_{\text{$\mathcal{M}(\mathbold{\eta})$}}\dot{\mathbold{x}},
\end{align*}
where $\mathcal{M}(\mathbold{\eta})\in \mathbb{R}^{(6+k)\times (6+k)}$  is the overall system mass matrix, and its block partition diagonal matrices $\mathcal{M}_{\mathbold{p}} \in \mathbb{R}^{3\times 3}$, $\mathcal{M}_{\mathbold{\omega}} \in \mathbb{R}^{3\times 3}$,  and $\mathcal{M}_{\mathbold{l}} \in \mathbb{R}^{k\times k}$ are symmetric mass and inertia matrices that describe the multi-rotor structure and the manipulator with respect to $B$ frame. Matrices $\mathcal{M}_{\mathbold{p}\mathbold{\omega}} \in \mathbb{R}^{3\times 3}$, $\mathcal{M}_{\mathbold{p}\mathbold{l}} \in \mathbb{R}^{3\times k}$, and $\mathcal{M}_{\mathbold{\omega} \mathbold{l}} \in \mathbb{R}^{3\times k}$  highlights the coupling effects. 
The total potential energy of the aerial manipulator system w.r.t. frame $E$ is $V_{AM} = g \mathbold{e}_3^T \left(m_t \mathbold{s}_{eb} + \mathcal{R}_{eb}\mathbold{\Delta} \right)$ where $m_t \triangleq m_b +\sum_{i=1}^{k}m_i$ is the total mass of the system, and $\mathbold{\Delta}(\mathbold{\eta}) = \sum_{i=1}^{k}m_i\,s_{bl_i}(\eta_1,\cdots,\eta_i)$ is the manipulator CoM in the frame $B$ . It is important to note that the vector $\mathbold{x}$ contains cyclic coordinates, $\int \mathbold{\dot{s}}_b$, $\int \mathbold{\omega}_b$.

The non-conservative forces (thrust, $T$) and moments (roll, pitch, yaw moment, $\tau_{\phi}$, $\tau_{\theta}$, and $\tau_{\psi}$) produced by the motors are used as control inputs for the multi-rotor system.
The torque input at the revolute joint connecting link $(i-1)$ and link $i$ is denoted as $\tau_{l_{i}}$, $\forall i \in \{1,\cdots,k\}$. As mentioned previously, the manipulator is fully actuated with $k$ total inputs $\mathbold{\tau}_{L} \triangleq[\tau_{l_1},\cdots,\tau_{l_k}]^{T}\in \mathcal{T}_{L}^k \subset \mathbb{R}^{k}$.

\subsection{The Reduced Aerial Manipulator Dynamics}\label{sec:reduceddynamics}
For the aerial manipulator system, the defined advected parameters $\mathbold{\gamma}$ and $\mathbold{\zeta}$ satisfy the following advection equations:
\begin{equation}\label{eq:advectionseq}
    \dot{\mathbold{\gamma}} = -\mathcal{S}(\mathbold{\omega}_b)\mathbold{\gamma} \quad \quad \dot{\mathbold{\zeta}} = -\mathcal{S}(\mathbold{\omega}_b)\mathbold{\zeta} + \dot{\mathbold{s}}_b.
\end{equation}
The conservative forces and momenta due to gravity can be derived using the Lagrange-d'Alembert principle:
\begin{equation} \label{eq:dV}
\begin{array}{ll}
    -dV_{AM} = -\frac{\partial V_{AM}}{\partial \mathbold{\gamma}}\dot{\mathbold{\gamma}}dt -\frac{\partial V_{AM}}{\partial \mathbold{\zeta}}\dot{\mathbold{\zeta}}dt-\frac{\partial V_{AM}}{\partial \mathbold{\eta}}\dot{\mathbold{\eta}}dt.
\end{array}
\end{equation}
Explicitly, the force and torque of gravity $\mathbold{\tau}_{\mathbold{p}}$ and $\mathbold{\tau}_{\mathbold{l}}$ are:
\begin{align}
    \mathbold{\tau}_{\mathbold{p}} = -gm_{t}\mathbold{\gamma}, \quad
    \mathbold{\tau}_{\mathbold{l}}  =-g\mathcal{S}(\mathbold{\gamma})\mathbf{\Delta}
    -g\left(\frac{\partial \mathbold{\Delta}}{\partial \mathbold{\eta}}\right)^{T}\!\mathbold{\gamma}. \label{eq:taul_pt1}
\end{align}
The Augmented Lagrangian, parametrized by $\mathbold{\gamma}$ and $\mathbold{\zeta}$,  is $SE(3)$-invariant by \cite{Burkhardt_2018}, Theorem 3.2.1.
The Augmented Lagrangian  $\overline{L}(\mathbold{\dot{s}}_b, \mathbold{\omega}_b,\mathbold{\eta},\mathbold{\gamma},\mathbold{\zeta})$ for the system is:
\begin{equation} \label{eq:overall_lag}
    \begin{array}{ll}
      \overline{L}\! = \!\frac{1}{2} \left[\begin{smallmatrix}
 \mathbold{\dot{s}}_b\\ \mathbold{\omega}_b\\\mathbold{\eta}
      \end{smallmatrix}\right]^{T}\!\!\mathcal{M}(\mathbf{\mathbold{\eta}})\!\!\left[\begin{smallmatrix}
 \dot{\mathbold{s}}_b\\ \mathbold{\omega}_b\\\mathbold{\eta}
      \end{smallmatrix}\right]
    -g( m_t \langle\mathbold{\gamma},\mathbold{\zeta} \rangle +\langle\mathbold{\gamma}, \mathbold{\Delta} \rangle).
    \end{array}
\end{equation}
In the standard basis for $\mathfrak{se}(3)$, the Lie algebra of $SE(3)$, the generalized linear and angular momenta take the form:  $\mathbold{p}\triangleq \frac{\partial \overline{L}}{\partial \dot{\mathbold{s}}_{b}}\in T^*\mathbb{R}^{3}$ and $\mathbold{l} \triangleq \frac{\partial \overline{L}}{\partial \mathbold{\omega}_b}\in T^*SO(3)$ which are:
\begin{equation} \label{eq:connectioneq_inv}
    \begin{bmatrix}
        \mathbold{p}  \\
        \mathbold{l}
    \end{bmatrix}
    =     
    \underbrace{\begin{bmatrix}
         \mathcal{M}_{\mathbold{p}} & \mathcal{M}_{\mathbold{p}\mathbold{\omega}}    \\
        \mathcal{M}_{\mathbold{p}\mathbold{\omega}}^T & \mathcal{M}_{\mathbold{\omega}}   
    \end{bmatrix}}_\text{$\triangleq \mathcal{M}_s$}   \begin{bmatrix}
         \dot{\mathbold{s}}_b  \\
         \mathbold{\omega}_b
    \end{bmatrix} + \underbrace{
    \begin{bmatrix}
         \mathcal{M}_{\mathbold{p}\mathbold{l}}\\
         \mathcal{M}_{\mathbold{\omega}\mathbold{l}}
    \end{bmatrix}}_\text{$\triangleq \mathcal{M}_{sl}$}     
    \dot{\mathbold{\eta}}.
\end{equation}
The aerial manipulator system's connection (which reconstructs the AM's motion from the momentum equations) is simply:
\begin{equation} \label{eq:connectioneq}
    \begin{bmatrix}
        \dot{\mathbold{s}}_{b} \\
        \mathbold{\omega}_b
    \end{bmatrix}
    =\mathcal{M}_s^{-1}\left(
    \begin{bmatrix}
         \mathbold{p} \\
         \mathbold{l}
    \end{bmatrix} -
   \mathcal{M}_{sl}\dot{\mathbold{\eta}}\right).
\end{equation}
Following the work in \cite{OSTROWSKI1998185}, the non-conservative forces and moments can be projected along the acting momentum directions. Since the symmetry directions are a simple basis of $\mathfrak{se}$(3), the momentum equation including conservative forces $-dV_{AM}$ and non-conservative forces and moments becomes:
\begin{equation}\label{eq:momenta}
\begin{array}{ll}
    \dot{\mathbold{p}} &= \mathbold{p} \times \mathbold{\omega}_b + \mathbold{\tau}_{\mathbold{p}} + T \mathbold{e}_3, \\
    \dot{\mathbold{l}} &= \mathbold{p} \times \dot{\mathbold{s}}_b + \mathbold{l} \times \mathbold{\omega}_b + \mathbold{\tau}_{\mathbold{l}} + 
    \begin{bmatrix}
        \tau_{\phi} &
        \tau_{\theta} &
        \tau_{\psi}
    \end{bmatrix}^{T}. \end{array}
\end{equation}
Further, the shape dynamics of the manipulator is derived based on the Euler-Lagrange equation: 
\begin{equation} \label{eq:ddeta}
   \ddot{\mathbold{\eta}}\! =\! \mathcal{X}\!\left(\!
   - \mathcal{C}(\mathbold{x},\dot{\mathbold{x}})\dot{\mathbold{x}}\! -\! \mathbold{D}(\mathbold{x})\! +\! 
   \left[ \begin{smallmatrix}
   \mathbf{0}_{1 \times 2} & T & \tau_{\phi} & \tau_\theta & \tau_\psi & \mathbold{\tau}_{L}^{T}
   \end{smallmatrix}\right]^T\right),
\end{equation}
where matrix $\mathcal{X} \triangleq \left[\begin{smallmatrix}
   0_{k\times 6} & I_{k\times k}
   \end{smallmatrix} \right]\mathcal{M}^{-1}(\mathbold{\eta})$, and $\mathcal{C}(\mathbold{x},\dot{\mathbold{x}})$ is the Coriolis matrix of the shape variable which can be calculated as the following \cite{MurrayRobotManipulator}:
\begin{equation} \label{eq:coriolismatrix}
    \mathcal{C}_{(p,j)} \!=\! \sum_{i=1}^{12+2k}\!\frac{1}{2}\left(\frac{\partial\mathcal{M}_{(p,j)}}{\partial x_{i}} + \frac{\partial\mathcal{M}_{(p,i)}}{\partial x_{j}} - \frac{\partial\mathcal{M}_{(i,j)}}{\partial x_{p}}  \right),
\end{equation}
and $\mathbold{D}(\mathbold{x})$ is the potential term where $\mathbold{D}(\mathbold{x}) = \frac{\partial V_{AM}}{\partial \mathbold{x}}$. 

\subsection{Control Affine Form of the AM's reduced dynamics} \label{sec:controlaffineform}
We choose the following state parametrization for the reduced dynamical equations: 
\begin{equation*} \label{eq:systemstatevariables}
    \mathbf{q} \triangleq [\mathbold{p}^{T},\mathbold{l}^{T},\phi,\theta,\psi,\mathbold{\eta}^{T},\dot{\mathbold{\eta}}^{T}]^{T}\in \mathbb{R}^{9+2k},
\end{equation*}
where Euler angles are used, instead of the advected parameter $\mathbold{\gamma}$, to parametrize  multi-rotor orientation.  With $\mathbf{q}$, the reduced AM dynamics take the control affine form:
\begin{align}
 \label{eq:control_affine_form_final}
    \dot{\mathbf{q}} &=  \mathbf{f}(\mathbf{q}) + \mathcal{G}(\mathbf{q})\mathbf{u},\\
    \mathbf{u} &= [\tau_{l_1},\cdots,\tau_{l_k},T,\tau_{\phi},\tau_{\theta},\tau_{\psi}]^{T}\in \mathcal{U}\subset \mathbb{R}^{4+k},
\end{align} 
where $\mathcal{U}$ is the space of all control inputs.
Further, the drift term and input vector fields are
\begin{align*} \label{eq:affinef}
    \mathbf{f}(\mathbf{q}) &= 
    \begin{bmatrix}
        \mathbold{p} \times \mathbold{\omega}_b - m_t g \mathbold{\gamma} \\
        \mathbold{p} \times \dot{\mathbold{s}}_{b} + \mathbold{l} \times \mathbold{\omega}_b -g\mathcal{S}(\mathbold{\gamma})\mathbf{\Delta}
    -g\left(\frac{\partial \mathbold{\Delta}}{\partial \mathbold{\eta}}\right)^{T}\mathbold{\gamma}\\
         \Xi^{-1}(\mathbold{\xi}) \mathbold{\omega}_{b}(\mathbold{p},\mathbold{l},\mathbold{\eta},\mathbold{\dot{\eta}})\\
        \dot{\mathbold{\eta}}\\
         \mathcal{X} \left(- \mathcal{C}(\mathbold{x},\dot{\mathbold{x}})\dot{\mathbold{x}} - \mathbold{D}(\mathbold{x})\right)
    \end{bmatrix},\\ 
\mathcal{G}(\mathbf{q}) &= \begin{bmatrix}
    0_{3\times k} & \mathbold{e}_{3} & 0_{3\times 3} \\
    0_{3\times k} & 0_{3\times 1} & I_{3\times 3}\\
    0_{3\times k} & 0_{3\times 1}& 0_{3\times 3}\\
    0_{2\times k} & 0_{2\times 1}& 0_{2\times 3}    \\
   \mathcal{X}\left[\begin{smallmatrix}0_{6\times k} \\I_{k\times k} \end{smallmatrix}\right] & \mathcal{X}\left[\begin{smallmatrix}0_{2\times 1}\\1\\0_{(3+2k)\times 1} \end{smallmatrix}\right] &\mathcal{X}\left[\begin{smallmatrix}0_{3\times 3}\\I_{3\times 3}\\0_{k\times 3} \end{smallmatrix}\right]\
\end{bmatrix}.
\end{align*}
It is important to note, states $\mathbold{\omega}_{b}$ and $\dot{\mathbold{s}}_b$ can be expressed in terms of $\mathbf{q}$ using the connection (\ref{eq:connectioneq}).

\section{Differential Flatness} \label{section:diff_flatness}

Flatness, first defined by Fliess et al. \cite{Fliess} and originating from differential algebra, transforms systems as a differential field generated by a set of states and inputs. A flat system has well characterized nonlinear structures which can be exploited in designing control algorithms for planning, trajectory generation, and stabilization \cite{equivalentsystems}. We here adopted the Lie-Bäcklund framework to approach to flatness and system equivalence. Consider two systems ($\mathcal{A}$, $X$) and ($\mathcal{B}$, $Y$) and a smooth mapping $\Phi:\mathcal{A}\to \mathcal{B}$. The pair $(\mathcal{A},X)$ is a system of differential equation where $\mathcal{A}$ is an open set of $\mathbb{R}^{n}$ and $X$ is a smooth vector field on $\mathcal{A}$.
\begin{definition}(Equivalent System)  \cite{equivalentsystems}
    Systems ($\mathcal{A}$, $X$) and ($\mathcal{B}$, $Y$) are equivalent at $(a,b)\in \mathcal{A}\times \mathcal{B}$ if there exist a smooth mapping $\Phi$ from a neighborhood of $a$ to a neighborhood of $b = \Phi(a)$ which is an endogenous transformation at $(a,b)$.
\end{definition}
An endogenous transformation is an invertible transformation that "exchanges" the trajectory between two systems. See \cite{equivalentsystems} for the rigorous definition. This leads to the formal definition of differential flatness:
\begin{definition} (Differentially Flat System)$\,$
    The control system $(\mathcal{A},X)$ is differentially \textit{flat} around $a$ if and only if it is equivalent to a \textit{trivial system} in a neighborhood of $a$.
\end{definition}
The \textit{trivial system} referred to in the above definition is the system $(\mathbb{R}^{\infty}_{a},X_{a})$ with coordinates $(\mathbold{y},\dot{\mathbold{y}},\ddot{\mathbold{y}},\ldots)$ and vector field $X_{a}(\mathbold{y},\dot{\mathbold{y}},\ddot{\mathbold{y}},\ldots) = (\dot{\mathbold{y}},\ddot{\mathbold{y}},\dddot{\mathbold{y}},\ldots)$. Casually speaking, the trivial system composes of chain of integrators. 
Further, the set $\mathbold{y} = \{y_{j} \text{ s.t. } j = 1,\cdots, a\}$ is called a \textit{flat output} of $\mathcal{A}$. This is equivalent to the more familiarizing yet informal definition.
Given the nonlinear system
\begin{equation}
    \label{eq:flatdefinition_def}
    \dot{\mathbold{x}} = f(\mathbold{x}(t),\mathbold{u}(t));\; \mathbold{x} \in \mathbb{R}^n, \mathbold{u} \in \mathbb{R}^m,
\end{equation}
where $\mathbold{x}$ are the $n$ states and $\mathbold{u}$ are the $m$ inputs, $\mathbold{y} \in \mathbb{R}^m$ is said to be a \textit{flat output} if:
\begin{itemize}
    \item $y_i = g_i(\mathbold{x},\mathbold{u},\dot{\mathbold{u}},\ldots,\mathbold{u}^{(j_i)}), \; j_i \in \mathbb{N}, i = 1, 2,\ldots , m$.
    \item $x_i = h_i(\mathbold{y}, \dot{\mathbold{y}},\ldots, \mathbold{y}^{(k_i)}),\; k_i \in \mathbb{N}, i = 1, 2,\ldots, n$, \\ 
          $ u_i = \Tilde{h}_i(\mathbold{y}, \dot{\mathbold{y}}, \ldots, \mathbold{y}^{(l_i)}), \; l_i \in \mathbb{N}, i = 1, 2,\ldots , m$.
    \item All components of $\mathbold{y}$ are differentially independent, i.e. $\mathbold{y}$ satisfies no differential equation $\Phi(\mathbold{y}, \dot{\mathbold{y}}, \ldots, \mathbold{y}^{(k)}) = 0, \; k \in \mathbb{N}$.
\end{itemize}

\subsection{Main Theorem and Proof}
\begin{theorem}
$\mathbold{\sigma} = [(\mathcal{R}_{eb}\mathbold{p})^{T},\psi,\mathbold{\eta}^{T}]^{T} = [\sigma_1, \cdots,\sigma_{4+k}]^T \in \mathbb{R}^{4+k}$ is set of differentially flat outputs for the defined class of aerial manipulators except at singularities which are $\phi, \theta = \frac{\kappa \pi}{2}$, $\forall \kappa \in \mathbb{Z}$ and $T = 0$.  
\end{theorem}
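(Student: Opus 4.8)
The plan is to build an endogenous transformation between the reduced control-affine system (\ref{eq:control_affine_form_final}) and a trivial chain of integrators with output $\mathbold{\sigma}$, by showing that every component of the state $\mathbf{q}$ and of the input $\mathbf{u}$ is a smooth function of $\mathbold{\sigma}$ and finitely many of its time derivatives, on the complement of the claimed singular set. Write $\mathbold{\sigma}_1 \triangleq \mathcal{R}_{eb}\mathbold{p}$, $\sigma_\psi \triangleq \psi$, $\mathbold{\sigma}_\eta \triangleq \mathbold{\eta}$. The reverse direction (each $\sigma_i$ a function of $\mathbf{q}$, with no inputs needed) is immediate, since $\mathcal{R}_{eb}$ is built from the Euler angles already in $\mathbf{q}$.

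The conceptual heart is the first step. A direct computation from (\ref{eq:connectioneq_inv}), using $\mathcal{M}_{\mathbold{p}} = m_t I$, $\mathcal{M}_{\mathbold{p}\mathbold{\omega}} = -\mathcal{S}(\mathbold{\Delta})$, $\mathcal{M}_{\mathbold{p}\mathbold{l}} = \partial\mathbold{\Delta}/\partial\mathbold{\eta}$, gives $\mathbold{p} = \mathcal{R}_{eb}^{T} m_t \dot{\mathbold{s}}_{e,\mathrm{CoM}}$, i.e. $\mathbold{\sigma}_1$ is exactly the linear momentum of the whole-system CoM expressed in $E$. Differentiating $\mathbold{\sigma}_1$, substituting the linear-momentum equation of (\ref{eq:momenta}) together with $\dot{\mathcal{R}}_{eb} = \mathcal{R}_{eb}\mathcal{S}(\mathbold{\omega}_b)$, the term $\mathcal{R}_{eb}\mathcal{S}(\mathbold{\omega}_b)\mathbold{p}$ cancels $\mathcal{R}_{eb}(\mathbold{p}\times\mathbold{\omega}_b)$ and, using $\mathcal{R}_{eb}\mathbold{\gamma} = \mathbold{e}_3$, one is left with the clean Newton relation
\begin{equation*}
  \dot{\mathbold{\sigma}}_1 = T\,\mathcal{R}_{eb}\mathbold{e}_3 - m_t g\,\mathbold{e}_3 .
\end{equation*}
Hence the thrust axis $\hat{z}_b = \mathcal{R}_{eb}\mathbold{e}_3 = (\dot{\mathbold{\sigma}}_1 + m_t g\,\mathbold{e}_3)/\lVert \dot{\mathbold{\sigma}}_1 + m_t g\,\mathbold{e}_3\rVert$ and the magnitude $T = \lVert \dot{\mathbold{\sigma}}_1 + m_t g\,\mathbold{e}_3\rVert$ follow from $(\mathbold{\sigma}_1,\dot{\mathbold{\sigma}}_1)$; this is where $T\neq 0$ is needed (otherwise $\hat{z}_b$ is undefined). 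I expect this cancellation — the payoff of choosing the \emph{inertial} CoM momentum rather than a body-frame momentum or the base position — to be the main obstacle to get right, since it is precisely what removes the ``unaccounted Coriolis terms'' that constrained earlier flatness proofs; carefully verifying the mass-matrix block identities and the frame change is the bulk of the work here.

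Given $\hat{z}_b$ and the flat output $\psi$, I would reconstruct $\mathcal{R}_{eb} = [\hat{x}_b\ \hat{y}_b\ \hat{z}_b]$ by the usual construction with an intermediate vector $[\cos\psi,\sin\psi,0]^{T}$, $\hat{y}_b$ its normalized cross product with $\hat{z}_b$, and $\hat{x}_b = \hat{y}_b\times\hat{z}_b$, then extract $\phi,\theta$ from $\mathcal{R}_{eb}$; this construction and the Euler kinematics $\Xi(\mathbold{\xi})$ in (\ref{eq:rollpitchyaw_eq}) degenerate exactly on $\{\phi,\theta = \kappa\pi/2\}$, which are the attitude singularities. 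Differentiating $\mathcal{R}_{eb}$ (a function of $\mathbold{\sigma}_1,\dot{\mathbold{\sigma}}_1,\ddot{\mathbold{\sigma}}_1,\psi,\dot{\psi}$) and using $\mathcal{S}(\mathbold{\omega}_b) = \mathcal{R}_{eb}^{T}\dot{\mathcal{R}}_{eb}$ recovers $\mathbold{\omega}_b$; then $\mathbold{p} = \mathcal{R}_{eb}^{T}\mathbold{\sigma}_1$, the top block of (\ref{eq:connectioneq_inv}) gives $\dot{\mathbold{s}}_b = \frac{1}{m_t}(\mathbold{p} - \mathcal{M}_{\mathbold{p}\mathbold{\omega}}\mathbold{\omega}_b - \mathcal{M}_{\mathbold{p}\mathbold{l}}\dot{\mathbold{\eta}})$, and the bottom block gives $\mathbold{l}$. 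With $\psi$, $\mathbold{\eta} = \mathbold{\sigma}_\eta$ and $\dot{\mathbold{\eta}} = \dot{\mathbold{\sigma}}_\eta$, the whole state $\mathbf{q}$ is now a function of $\mathbold{\sigma}$ and its derivatives up to second order.

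Finally I would recover the remaining inputs: $(\tau_\phi,\tau_\theta,\tau_\psi)^{T}$ from the angular-momentum equation of (\ref{eq:momenta}), solving for the moment given $\dot{\mathbold{l}}$ (one more differentiation), $\mathbold{p},\dot{\mathbold{s}}_b,\mathbold{l},\mathbold{\omega}_b$, and $\mathbold{\tau}_{\mathbold{l}}$ (a function of $\mathbold{\gamma} = \mathcal{R}_{eb}^{T}\mathbold{e}_3$ and $\mathbold{\eta}$); and $\mathbold{\tau}_L$ from the shape dynamics (\ref{eq:ddeta}) using $\ddot{\mathbold{\eta}} = \ddot{\mathbold{\sigma}}_\eta$ and the already-found quantities, the block of $\mathcal{X}$ multiplying $\mathbold{\tau}_L$ being the lower-right $k\times k$ block of $\mathcal{M}^{-1}(\mathbold{\eta})$, invertible since $\mathcal{M}\succ 0$. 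All constructed maps are smooth on the complement of $\{\phi,\theta = \kappa\pi/2\}\cup\{T = 0\}$. For differential independence, note that the inverse map is defined on an open subset of the jet space and sends any sufficiently smooth curve $\mathbold{\sigma}(\cdot)$ in the non-singular region to an admissible trajectory of (\ref{eq:control_affine_form_final}); hence $\mathbold{\sigma}$ satisfies no differential relation, which together with the two function relations above shows $\mathbold{\sigma}$ is a flat output and completes the proof.
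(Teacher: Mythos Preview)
Your proposal is correct and follows essentially the same route as the paper's proof: differentiate $\mathcal{R}_{eb}\mathbold{p}$ to obtain the Newton relation $\dot{\mathbold{\sigma}}_1 = T\,\mathcal{R}_{eb}\mathbold{e}_3 - m_t g\,\mathbold{e}_3$, recover $T$ and the attitude from it together with $\psi$, then pull back through the connection (\ref{eq:connectioneq_inv}) to get $\mathbold{p},\dot{\mathbold{s}}_b,\mathbold{l}$, and finally solve the angular-momentum equation and the shape dynamics for the moment and joint torques. The only cosmetic differences are that the paper extracts $\phi,\theta$ via explicit Euler-angle formulas (\ref{eq:phi_insigma})--(\ref{eq:theta_insigma}) and obtains $\mathbold{\omega}_b$ from $\Xi(\mathbold{\xi})\dot{\mathbold{\xi}}$, whereas you build $\mathcal{R}_{eb}$ column-by-column from $\hat{z}_b$ and the yaw direction and read $\mathbold{\omega}_b$ from $\mathcal{R}_{eb}^{T}\dot{\mathcal{R}}_{eb}$; both are standard and equivalent away from the stated singular set.
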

\begin{proof}
For ease of notation, we define $\mathcal{R}_{eb}\mathbold{p} \triangleq  \mathbold{p}_{e}$.
Physically speaking, the flat outputs consists of $\mathbold{p}_{e}$, the linear momentum of the CoM in $E$ frame, $\psi$, yaw position angle, and $\mathbold{\eta}$, relative joint angles. We will state without proving that these outputs are differentially independent except at singularity points addressed at the end of this section. 

Starting with extracting the only external force, thrust $T$, we can differentiate $\mathbold{p}_{e}$ that unfolds the following relationship:
\begin{equation} \label{eq:Rebp}
    \frac{d}{dt}\mathbold{p}_{e}= \mathcal{R}_{eb} \left(\dot{ \mathbold{p}} - \mathbold{p} \times \mathbold{\omega}_{b} \right) = -m_t g \mathbold{e}_{3}+\mathcal{R}_{eb} \mathbold{e}_{3} T.
\end{equation}
Algebraically, we can use \eqref{eq:Rebp} to extract the forces and orientation (represented using Euler angles):
\begin{equation} \label{eq:T_insigma}
    T(\dot{\mathbold{p}}_{e})= \|\dot{\mathbold{p}}_{e} + m_tg\mathbold{e}_3 \|_2 ,
\end{equation}
\begin{equation} \label{eq:phi_insigma}
    \phi(\dot{\mathbold{p}}_{e},\psi) = \sin^{-1}\left(\frac{\mathbold{e}_1^{T}\dot{\mathbold{p}}_{e}\sin(\psi) - \mathbold{e}_2^{T}\dot{\mathbold{p}}_{e} \cos\psi}{T} \right),
\end{equation}
\begin{equation}\label{eq:theta_insigma}
    \theta(\dot{\mathbold{p}}_{e},\psi) = \tan^{-1}\left(\frac{\mathbold{e}_1^{T}\dot{\mathbold{p}}_{e}\cos(\psi) + \mathbold{e}_2^{T}\dot{\mathbold{p}}_{e} \sin\psi}{\mathbold{e}_3^{T}\dot{\mathbold{p}}_{e} + m_tg} \right).
\end{equation}
Therefore, $\mathcal{R}_{eb}(\mathbold{\xi}) = \mathcal{R}_{eb}(\dot{\mathbold{p}}_{e},\psi)$.

Using the roll-pitch-yaw dynamics \eqref{eq:rollpitchyaw_eq}, the body rates can be computed as $\mathbold{\omega}_b = \Xi(\mathbold{\xi})\dot{\mathbold{\xi}} = \mathbold{\omega}_b (\dot{\mathbold{p}}_{e},\ddot{\mathbold{p}}_{e},\psi,\dot{\psi})$.
We can also recover the body frame general linear momenta $\mathbold{p}$ as a function as the flat outputs and their derivatives once $\mathcal{R}_{eb}$ is known, $\mathbold{p}(\mathbold{p}_{e},\dot{\mathbold{p}}_{e},\psi) = \mathcal{R}_{eb}^{T}\mathbold{p}_{e}$.
Using the connection \eqref{eq:connectioneq}, we can obtain the multi-rotor linear velocity in frame $B$:
\begin{equation*} \label{eq:sbdot_insigma}
    \dot{\mathbold{s}}_{b}\left(\mathbold{p}_e, \dot{\mathbold{p}}_{e}, \ddot{\mathbold{p}}_{e}, \psi, \dot{\psi}, \mathbold{\eta}, \dot{\mathbold{\eta}}\right) = \frac{1}{m_t}\left(\mathbold{p} - \mathcal{M} _{\mathbold{{p\omega}}}\mathbold{\omega}_{b}-\mathcal{M}_{\mathbold{pl}}\dot{\mathbold{\eta}}\right).
\end{equation*}
From the definition of generalized linear momentum \eqref{eq:connectioneq_inv}, the overall CoM angular momenta, $\mathbold{l}$, and its derivative, $\dot{\mathbold{l}}$, are also functions of the flat outputs and their time derivatives:
\begin{equation*} \label{eq:l_insigma}
    \mathbold{l}(\mathbold{p}_e, \dot{\mathbold{p}}_{e}, \ddot{\mathbold{p}}_{e},\psi,\dot{\psi},\mathbold{\eta},\dot{\mathbold{\eta}})  = \mathcal{M}_{\mathbold{p}\mathbold{\omega}}^{T}\dot{\mathbold{s}}_b + \mathcal{M}_{\mathbold{\omega}}\mathbold{\omega}_b + \mathcal{M}_{\mathbold{\omega} \mathbold{l}}
    \dot{\mathbold{\eta}}.
\end{equation*}
Compactly, $\dot{\mathbold{l}} =\dot{\mathbold{l}}(\mathbold{p}_e, \dot{\mathbold{p}}_{e}, \ddot{\mathbold{p}}_{e}, \dddot{\mathbold{p}}_{e},\psi,\dot{\psi},\ddot{\psi},\mathbold{\eta},\dot{\mathbold{\eta}},\ddot{\mathbold{\eta}})$. Lastly, we have 3 equations and 3 unknowns to algebraically solve for the roll-pitch-yaw torque $\tau_\phi, \tau_\theta$ and $\tau_\psi$:
\begin{equation} \label{eq:torquesrollpitchyaw_insigma}
    \begin{bmatrix}
         \tau_{\phi}\\
         \tau_{\theta}\\
         \tau_{\psi}
    \end{bmatrix} 
    \left(\begin{smallmatrix}\mathbold{p}_e, \dot{\mathbold{p}}_{e}, \ddot{\mathbold{p}}_{e},\\ \dddot{\mathbold{p}}_{e},\psi,\dot{\psi},\\\ddot{\psi},\mathbold{\eta},\dot{\mathbold{\eta}},\ddot{\mathbold{\eta}}\end{smallmatrix}\right) 
    = \dot{\mathbold{l}} -\mathbold{l}\times \omega_b - \mathbold{\tau}_{\mathbold{l}} - \mathbold{p} \times \dot{\mathbold{s}}_{b}.
\end{equation}
The torque input for the manipulator can be calculated by substituting the respective variables into \eqref{eq:ddeta}:
\begin{equation} \label{eq:manipulatortorques_insigma}
    \mathbold{\tau}_{\mathbold{L}} = \left[\begin{smallmatrix}
   0_{k\times 6} & I_{k\times k}
   \end{smallmatrix} \right]\left(\mathcal{M}(\mathbold{\eta})\ddot{\mathbold{x}} + \mathcal{C}(\mathbold{x},\dot{\mathbold{x}})\dot{\mathbold{x}} + \mathbold{D}(\mathbold{x})\right),
\end{equation}
where $\ddot{\mathbold{x}} = \begin{bmatrix}
    \ddot{\mathbold{s}}_b^T &
    \dot{\mathbold{\omega}}_b^T &
    \ddot{\mathbold{\eta}}
    \end{bmatrix}^T$.
    By differentiation, $\ddot{\mathbold{s}}_b$ and $\ddot{\mathbold{\omega}}_b$ are also functions of the flat outputs and their derivatives up to $\dddot{\mathbold{p}}_{e}$, $\ddot{\psi}$, and $ \ddot{\mathbold{\eta}}$.
As a result, one can show that $\mathbold{\tau}_{L} = \mathbold{\tau}_{L}(\mathbold{p}_e, \dot{\mathbold{p}}_{e}, \ddot{\mathbold{p}}_{e}, \dddot{\mathbold{p}}_{e},\psi,\dot{\psi},\ddot{\psi},\mathbold{\eta},\dot{\mathbold{\eta}},\ddot{\mathbold{\eta}})$. Therefore, all states $\mathbold{q}$ and inputs $\mathbold{u}$ are algebraic functions of the flat outputs $\mathbold{\sigma}$ and their time derivatives.
\end{proof}

The only singularities arising in the proof are when the system at free fall, i.e. $\mathbold{e}_3^{T}\dot{\mathbold{p}}_{e} + m_tg= 0$, or equivalently $T\cos\phi \cos\theta  = 0$.
Therefore, we confine the domain of the attitude angles to avoid singularity, i.e. $\phi \in (-\frac{\pi}{2},\frac{\pi}{2})$, $\theta \in (-\frac{\pi}{2},\frac{\pi}{2})$, and thrust to be strictly positive, $T>0$.


\subsection{Equivalence to trivial system:}
Similar to the multi-rotor case, 
with a second order dynamic extension of the total thrust, we can obtain a valid relative degree using the flat output as the desired output \cite{flatnessMPC}. We extend the state variable to include the thrust dynamics:
$\mathbf{q}_{de} \triangleq \begin{bmatrix}
    \mathbf{q}^{T},T,\dot{T}
\end{bmatrix}^{T}$. Consequently, the control inputs become $\mathbf{u}_{de} = \begin{bmatrix}
     \mathbold{\tau}_{L}^{T},\ddot{T},\tau_{\phi},\tau_{\theta},\tau_{\psi}
\end{bmatrix}^{T}\in \mathcal{U}_{de}$.
We here restate the control-affine form with extended states and inputs. 
\begin{align} \label{eq:CAF_de}
    \dot{\mathbf{q}}_{de} = \mathbf{f}_{de}(\mathbf{q}_{de}) &+ \mathcal{G}_{de}(\mathbf{q}_{de})\mathbf{u}_{de},\\
    \mathbf{f}_{de} \!= \!\!
    \left[\begin{smallmatrix}
         \!\mathbf{f}(\mathbf{q})+\left[\begin{smallmatrix}
         \mathbold{e}_{3}T\\
         0_{8\times1}\\
         \mathcal{X}\left[\begin{smallmatrix}0_{2\times 1}\\T\\0_{(3+2k)\times 1} \end{smallmatrix}\right]
    \end{smallmatrix}\right]\\
         \dot{T}\\
         0
    \end{smallmatrix} \!\right]&\!,\,
    \mathcal{G}_{de} \!=\!\!\left[\begin{smallmatrix}
    0_{1\times (k+4)}
      \\ \mathbf{\mathcal{G}(\mathbf{q})}_{(2:11,:)} \\
        \!\mathcal{X}\left[\begin{smallmatrix}0_{3\times k} & 0_{3\times 1} & 0_{3\times 3}\\
        0_{3\times k} & 0_{3\times 1} & I_{3\times 3}\\
        I_{k\times k} & 0_{k\times 1} & 0_{k\times 3}\\
        \end{smallmatrix}\right] \\
         0_{1\times (k+4)}  \\
         [0_{1\times k}  \quad 1\quad   0_{1\times3}]
    \end{smallmatrix} \!\right]\!\!. \nonumber
\end{align}
Since the original inputs $\mathbf{u}$ are functions of the flat output up to their $3^{rd}$ time derivatives, we propose an auxiliary control input $\mathbf{v} = \left[\begin{smallmatrix}
  \left(\mathbold{p}^{T}_{e}\right)^{(3)} &
  \psi^{(2)}&
  \left(\mathbold{\eta}^{T}\right)^{(2)}
  \end{smallmatrix}\right]^{T}$, where the $^{(\cdot)}$ denotes the number of time differentiation, which can expressed as the following:
\begin{equation} \label{eq:prefeedbacklinearizedfrom}
    \mathbf{v}= 
   \mathbf{f}_v(\mathbold{\sigma},\dot{\mathbold{\sigma}},\ddot{\mathbold{\sigma}},T,\dot{T}) + \mathcal{G}_v(\mathbold{\sigma},\dot{\mathbold{\sigma}},\ddot{\mathbold{\sigma}},T,\dot{T}) \mathbf{u_{de}},
\end{equation}
where $\mathbf{f}_v\in \mathbb{R}^{4+k}$ and $\mathcal{G}_{v} \in \mathbb{R}^{(4+k) \times (4+k)}$ are the drift vector field and actuation matrix, respectively. Using the extended dynamics \eqref{eq:CAF_de} and connection \eqref{eq:connectioneq_inv}, one can show $\mathbf{f}_v$ and $\mathcal{G}_{v}$ can be expressed as a function of $\mathbf{q}_{de}$ by differentiating the flat outputs until the control inputs surfaces.

By the system equivalence argument, one can obtain the following Brunovsky's canonical description of \eqref{eq:control_affine_form_final}:
\begin{multline}
    \label{eq:brunovskyscanonicalform}
    \frac{d}{dt}\!\!\!\left[\!\begin{smallmatrix}
         \mathbold{p}_{e}\\
         \psi \\
         \mathbold{\eta} \\
         \dot{\mathbold{p}}_{e}
         \\
         \dot{\psi}\\
     \dot{\mathbold{\eta}}\\
        \ddot{\mathbold{p}}_{e}
    \end{smallmatrix} \!\right]\!\!  =\!\underbrace{
    \left[\!\begin{smallmatrix}
        0_{(7\!+\!k)\times (4\!+\!k)} & I_{(7\!+\!k)\!\times\! (7\!+\!k)} \\
        0_{(4\!+\!k)\!\times \!(4\!+\!k)} & 0_{(4\!+\!k)\!\times\! (7\!+\!k)}
    \end{smallmatrix}\!\right]}_{\text{$\mathcal{F}_{B}$}}\!\!
    \left[\!\begin{smallmatrix}
         \mathbold{p}_{e}\\
         \psi \\
         \mathbold{\eta} \\
         \dot{\mathbold{p}}_{e}
         \\
         \dot{\psi}\\
     \dot{\mathbold{\eta}}\\
        \ddot{\mathbold{p}}_{e}
    \end{smallmatrix} \!\right]\! +\!\underbrace{\left[\!\begin{smallmatrix}
         0_{(7\!+\!k)\!\times\! (4\!+\!k)}  \\
         I_{(4\!+\!k)\!\times\! (4\!+\!k)}
    \end{smallmatrix}\! \right]}_{\text{$\mathcal{G}_ B$}}\!\mathbf{v}.
\end{multline}

\begin{figure*}[ht!]
    \centering
    \subfloat[Desired Output Tracking Performance and Min-Norm control inputs using CLF-QP controller from Simulation. $\mathcal{Q}$ is a $15$ by $15$ identity matrix to maximize radius of attraction. An initial condition disturbance of is introduced to demonstrate robustness. Linear momentum $\sigma_1$, $\sigma_2$, and $\sigma_3$ are in $kg\cdot m/s$ and angles $\psi$, $\eta_1$, and $\eta_2$ are in $rad$. ((b)) CLF-QP and theoretical control input are compared where the theoretical ones are obtained from flatness derivation: \eqref{eq:torquesrollpitchyaw_insigma}, \eqref{eq:manipulatortorques_insigma}, extend thrust using given desired trajectory. Torque inputs $\tau_{l_1}$, $\tau_{l_2}$, $\tau_{\phi}$, $\tau_{\theta}$, and $\tau_{\psi}$ are in $N\cdot m$, and $\ddot{T}$ has the unit of $N/s^2$.  ((c)) Visualization of aerial manipulator follows a desired path and stabilizes while the arm is manipulating.]{%
      \includegraphics[width=17cm]{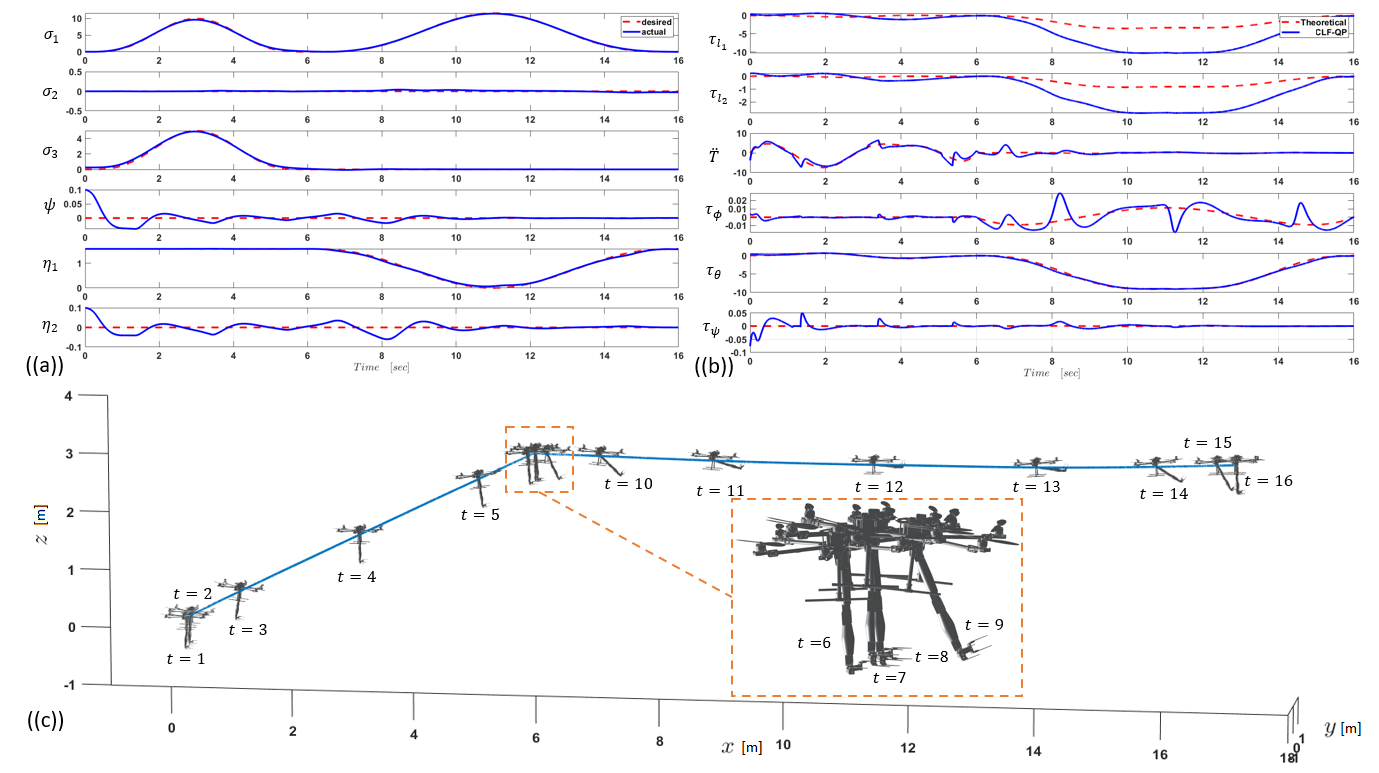}
    }
\vspace{0.1cm}
\caption{Simulation result of an aerial manipulator with a 2-link manipulator.}
\label{fig:simresults}
\end{figure*}

\subsection{Results Discussion}


Our general class of aerial manipulators are strongly and nonlinearly {\em accessible} and {\em controllable} because of flatness \cite{Fliess}.  The flatness proof is rather algebraic, which leads to efficient implementations, but masks its geometric significance.  One can observe similarities between the AMs analyzes in this paper and a the well known proof of  multi-rotor flatness \cite{flatquad1}. 
Our method of determining the rotation matrix $\mathcal{R}_{eb}$ is identical to the multi-rotor case, but uses an Euler representation of $SO(3)$ representation. Inverting the yaw rotation by $\psi$, the directional vector parallel to gravity in body frame gives us the roll and pitch angle as shown in \eqref{eq:phi_insigma} and \eqref{eq:theta_insigma}. A key difference arises in our use of the reconstruction equation \eqref{eq:connectioneq}, which models the coupling between the arm dynamics and multi-rotor base dynamics in symbolically compact way.

The dynamic extension is non-conventional, but practically motivated. It can be realize by considering a standard DC motor model. Let $\Omega_j$ be the RPM of the $j^{th}$ motor. The thrust and resistive torque generated by the $j^{th}$ motor can be modeled as $c_{T}\Omega_{j}^2$ and $\pm c_{Q}\Omega_{j}^2$, respectively, where $c_{T}$ and $c_{Q}$ are coefficients that depends on rotor geometry. Further, the rotor acceleration per minute, $\dot{\Omega}_j$, is proportional to the motor axial torque as well as the armature current as $i(t) = \frac{\dot{\Omega}_j}{K_{m_j}}$. Using a standard RLC circuit model, we can express $\ddot{T}_j$ as:
\begin{align*}
    \ddot{T}_j = 2c_{T}\frac{i^2(t)}{K^2_{m_j}} + \frac{2c_{T}}{K_{m_j}L_j}\Omega_j\left(U_j - K_{\mathcal{V}_j}\Omega_{j}-R_j\,i(t) \right),
\end{align*}
where $U_j$, $R_j$, $L_j$, $K_{m_j}$ and $K_{\mathcal{V}_j}$ are the voltage, internal resistance, inductance, torque constant, and back-emf constant of brushless motor $j$, respectively. Thus, $\ddot{T}$ can be modulated by regulating the electrical power feed to the motors.

\section{Exponentially Tracking Controllers} \label{section:controller}

Like multi-rotors, AMs are underactuated. Further, their CoMs can shift signficantly during arm manipulation. These characteristics offer challenges to the stabilization and trajectory tracking problems of  AMs. Suppose we are given a desired trajectory $\mathbold{\sigma}_{d}(t) = [\mathbold{p}_{e,d}^{T}(t),\psi_{d}(t),\mathbold{\eta}_{d}^{T}(t)]^{T}$, as a function of time. We assume the path be at least three times continuously differentiable and dynamically feasible (i.e., desired behavior can be realized within the range of control inputs). We can exploit the Brunovsky's equivalent system, a readily fully controllable normal form, as in \cite{flatnessMPC} and \cite{non_min_Phase_example}. Hence, we use this dynamic feedback linearized form \eqref{eq:control_affine_form_final} to implement an optimization based controller that guarantees local exponential tracking. 
 
Starting with defining  the tracking error between the actual output and the time-dependent desired trajectory:
\begin{equation}
    \mathbf{e}(\mathbf{q}_{de})\triangleq \begin{bmatrix}
    \mathbold{p}_{e}(\mathbf{q}_{ed})\\
    \hline
    \psi(\mathbf{q}_{ed})\\
    \mathbold{\eta}(\mathbf{q}_{ed})
    \end{bmatrix}-  \begin{bmatrix}
    \mathbold{p}_{e,d}(t)\\ \hline
    \psi_{d}(t)\\
    \mathbold{\eta}_{d}(t)
    \end{bmatrix} = \begin{bmatrix}
         \mathbf{e}_1(\mathbf{q}_{de}(t))\\ \hline \\
         \mathbf{e}_2(\mathbf{q}_{de}(t))\\
    \end{bmatrix}.
\end{equation}
 One can easily verify that the output tracking error $\mathbf{e}(\mathbf{q}_{de})$ has a vector relative degree $\mathbf{r} =\left[\begin{smallmatrix}
      3 & 3 & 3 & 2 & 2 & \cdots& 2
 \end{smallmatrix}\right]$. Explicitly,
 \begin{equation}
     \mathbf{e}^{(\mathbf{r})}\!(\mathbf{q}_{de}) = \begin{bmatrix}
          \mathbf{e}_{1}^{(3)}(\mathbf{q}_{de})\\
          \mathbf{e}_{2}^{(2)}(\mathbf{q}_{de})
     \end{bmatrix}= \mathbf{f}_{\mathbf{v}}(\mathbf{q}_{de}) - \mathbold{\sigma}_{d}^{\mathbf{(r)}} + \mathcal{G}_{\mathbf{v}}\mathbf{u}_{de}.
 \end{equation}
The rank of the decoupling matrix  $\mathcal{G}_{\mathbf{v}}$ is verified symbolically, indicating that matrix $\mathcal{G}_{\mathbf{v}}$ is invertible except at the defined singularities. We leverage the well-established CLF-QP formulation \cite{SONTAG1989117},\cite{ames2014rapidly},\cite{GRIZZLE20141955} to drive the error, $\mathbf{e}(\mathbf{q_{de})}$, to zero exponentially. Let
$\mathbf{h}(\mathbf{q_{de},t}) = \begin{bmatrix}
    \mathbf{e}_1^{T} & \mathbf{e}_2^{T} &
     \dot{\mathbf{e}}_1^{T} & \dot{\mathbf{e}}_2^{T} & \ddot{\mathbf{e}}_1^{T} 
\end{bmatrix}^{T}$, $\dot{\mathbf{h}} = \mathcal{F}_{B}\mathbf{h} + \mathcal{G}_{B}\mathbf{v}$ where $\mathcal{F}_B$,  $\mathcal{G}_{B}$, and $\mathbf{v}$ are stated in \eqref{eq:brunovskyscanonicalform}. We construct the CLF, $V(\mathbf{h})$, using the solution $\mathcal{P} = \mathcal{P}^{T}\succ 0$ of the Continuous-Time Algebraic Riccati Equation (CARE):
\begin{align} \label{eq:care}
    \!V(\mathbf{h}) = \mathbf{h}^{T}\mathcal{P}\mathbf{h}, \;\;
    \mathcal{F}_{B}^{T}\mathcal{P} + \mathcal{P}^{T}\mathcal{F}_{B} - \mathcal{P}\mathcal{G}_{B} \mathcal{G}_{B}^{T}\mathcal{P}  = - \mathcal{Q},
\end{align}
where $\mathcal{Q} = \mathcal{Q}^{T} \succ 0 $. Inspired by \cite{ames2014rapidly}, we propose the following task space QP controller:
\small
\begin{align} \label{eq:ES-CLF-QP}
\begin{split}
    \mathbf{u}_{de}^{*} &= \argmin_{\mathbf{u}_{de} \in \mathcal{U}_{de} \subset \mathbb{R}^{4+k}} \|\mathcal{G}_{v}(\mathbf{q}_{de}) \mathbf{u}_{de} +\mathbf{f}_{v}(\mathbf{q}_{de})\|^2,\\
    \mbox{s.t.}\! \quad L_{\mathcal{F}_B}\!&V(\mathbf{q}_{de},t) + L_{\mathcal{G}_B}\!V(\mathbf{q}_{de},t)(\mathcal{G}_{de}(\mathbf{q}_{de})\mathbf{u}_{de}+\mathbf{f}_{\mathbf{v}}(\mathbf{q}_{de})) \\ 
    &-L_{\mathcal{G}_B}\!V(\mathbf{q}_{de},t)\mathbold{\sigma}_{d}^{\mathbf{(r)}}
    \leq -\frac{\lambda_{min}(\mathcal{Q})}{\lambda_{max}(\mathcal{P})}V(\mathbf{q}_{de},t),
\end{split}
\end{align}
\normalsize
where $L_{\mathcal{F}_B}\!V$ and $L_{\mathcal{G}_B}\!V$ are the following
\begin{align*}
    \dot{V}(\mathbf{h}) =\underbrace{\mathbf{h}^T(\mathcal{F}_{B}^{T}\mathcal{P} + \mathcal{P}\mathcal{F}_{B})\mathbf{h}}_{L_{\mathcal{F}_{B}}V}+\underbrace{2\mathbf{h}^{T}\mathcal{P}\mathcal{G}_{B}}_{L_{\mathcal{G}_{B}}V}\mathbf{v}<0, \quad   \forall \mathbf{h},
\end{align*}
One key advantage of  the exponentially stabilizing CLF-QP controller \eqref{eq:ES-CLF-QP} is its ability to incorporate nonholonomic constraints. E.g., it can include a friction cone constraint to model the contact between the arm's end-effector and a surface, or the act of AM perching. On the other hand, if input constraints are not incorporated in the trajectory planning process, they can be added into the QP formulation at the cost of a lower exponential convergence rate.

\section{Simulation Result} \label{section:simulation}


We verify the tracking performance of the proposed control strategy in simulation. The desired trajectory is designed to highlight the effects of CoM shift: it consists of two piece-wise continuous and differentiable segments. The first segment highlights the multi-rotor's ability to follow a specified path under the controller. The manipulator deploys during the second path segment, thereby testing the system's ability to track the desired path with rapid change in overall CoM. In the simulation, an aerial manipulator with 2-link arm is modeled with the base vehicle mass being $2.7\, kg$, and the manipulator masses of the first and second path segments is $0.5\, kg$ and $1.0\, kg$, respectively, which mimics the delivery of a payload. The first and second link manipulator lengths are set to be $0.25\, m$ and $0.2\, m $. Moreover, the 2-link manipulator forms a planar manipulator with $\mathcal{R}_{L_2} = \mathcal{R}_{L_1} = I_{3\times 3}$, i.e. the appended arm is restricted to manipulate in $\hat{x}_b - \hat{z}_b$ plane. The control frequency is set to be 1 kHz, and a zero-order-hold (ZOH) is used between each controller update.
In Fig. \ref{fig:simresults}((a)), the controller provides promising tracking performance despite an initial disturbance. A comparison of control effort between the QP-based controller and flatness generated control action is given in Fig. \ref{fig:simresults}((b)). 
By deviating from the theoretical flatness control action as needed, the CLF-QP controller can reject initial state error and exponentially track the desired trajectory.

\section{Conclusion}\label{sec:Conclusion}
In summary, the EoM of a general class of broken symmetry aerial manipulators is derived using Lagrangian Reduction with advected parameters. Inspired by the dynamical coupling in the reduced equations, we theorize and prove that the outputs consisting of $\mathbold{p}_e$, overall CoM linear momentum, $\psi$, yaw position angles, and $\mathbold{\eta}$, manipulator relative joint angles, are differentially flat. The flat output parameterization of the control inputs necessitated a second-order dynamic extension of thrust input to allow a valid vector relative degree and dynamic feedback linearization. Using this extension, we introduced a CLF-QP-based exponentially stabilizing controller that guarantees local exponential tracking of the desired flat outputs.

In future work, we plan to demonstrate the controller's performance on a hardware system, integrated with flatness-based trajectory planning algorithms. Moreover, we seek to include the contact constraints that arise in perching-like behavior and physical contact and design analogous controllers that can accommodate such constraints.


\addtolength{\textheight}{-12cm}   

\bibliographystyle{ieeetr}
\bibliography{refs}

\end{document}